\documentclass[letterpaper,journal]{IEEEtran}

\usepackage{enumitem}
\usepackage{amsmath}
\usepackage{amssymb}
\usepackage{amsfonts}
\usepackage{mathtools}
\usepackage{bm}
\usepackage{booktabs}
\usepackage{graphicx}
\usepackage[caption=false,font=footnotesize]{subfig}
\usepackage{color}
\usepackage{url}

\usepackage{algorithm}
\usepackage{algpseudocode}

\graphicspath{{./}{../}{figures/}{./figures/}}

\newtheorem{proposition}{Proposition}
\newtheorem{corollary}{Corollary}

\begin{document}

\title{Dream2Reward: Transition-Alignment Reward Models from Positive Demonstrations for Robotic Manipulation}

\author{Haoyu~Zhang,
        Zecui~Zeng,
        Bin~Wang,
        Lusong~Li,
        Liang~Lin,~\IEEEmembership{Fellow,~IEEE},
        and~Long~Cheng,~\IEEEmembership{Fellow,~IEEE}%
\thanks{Corresponding authors: Long Cheng; Zecui Zeng.}%
\thanks{H. Zhang and L. Cheng are with the School of Artificial Intelligence,
University of Chinese Academy of Sciences, Beijing 100049, China. They are
also with the State Key Laboratory of Multimodal Artificial Intelligence
Systems, Institute of Automation, Chinese Academy of Sciences, Beijing
100190, China (e-mail: long.cheng@ia.ac.cn).}%
\thanks{L. Li, B. Wang, Z. Zeng, and L. Lin are with JD Explore Academy,
China (e-mail: zengzecui123@gmail.com).}%
}


\markboth{IEEE Transactions on Pattern Analysis and Machine Intelligence}%
{Zhang \MakeLowercase{\textit{et al.}}:
Dream2Reward: Transition-Alignment Reward Models}

\maketitle

\begin{abstract}
Learning robotic policies requires dense rewards that remain informative when behavior departs from successful demonstrations. Progress-based rewards estimate how far an observation has advanced along a nominal successful trajectory, but may remain high after an incorrect transition. We introduce \textbf{Dream2Reward}, which learns a language-conditioned \emph{successful latent transition field} from positive demonstrations. Given the visual history up to a transition start, the model predicts the latent displacement associated with successful execution and scores the observed displacement through signed directional and symmetric magnitude agreement. This transition-level comparison penalizes wrong-direction, overshooting, and stagnant motion even when the resulting observation appears to show progress. Dream2Reward requires no failure annotations, progress labels, or synthetic negatives, and produces a dense causal reward. Across mechanism diagnostics and shared-trajectory evaluations, it provides stronger success--failure separation and more informative feedback on low-quality behavior than progress-based alternatives. Across online and offline policy learning, the same frozen reward model reduces reward hacking and supports stronger downstream performance, including in real-robot manipulation. These results show that comparing realized motion with predicted successful change provides an effective way to convert positive demonstrations into dense rewards for robot learning.
\end{abstract}

\begin{IEEEkeywords}
Reward learning, latent transition models, failure-sensitive rewards, reinforcement learning, robotic manipulation.
\end{IEEEkeywords}

\begin{figure*}[t]
\centering
\includegraphics[width=1.0\textwidth]{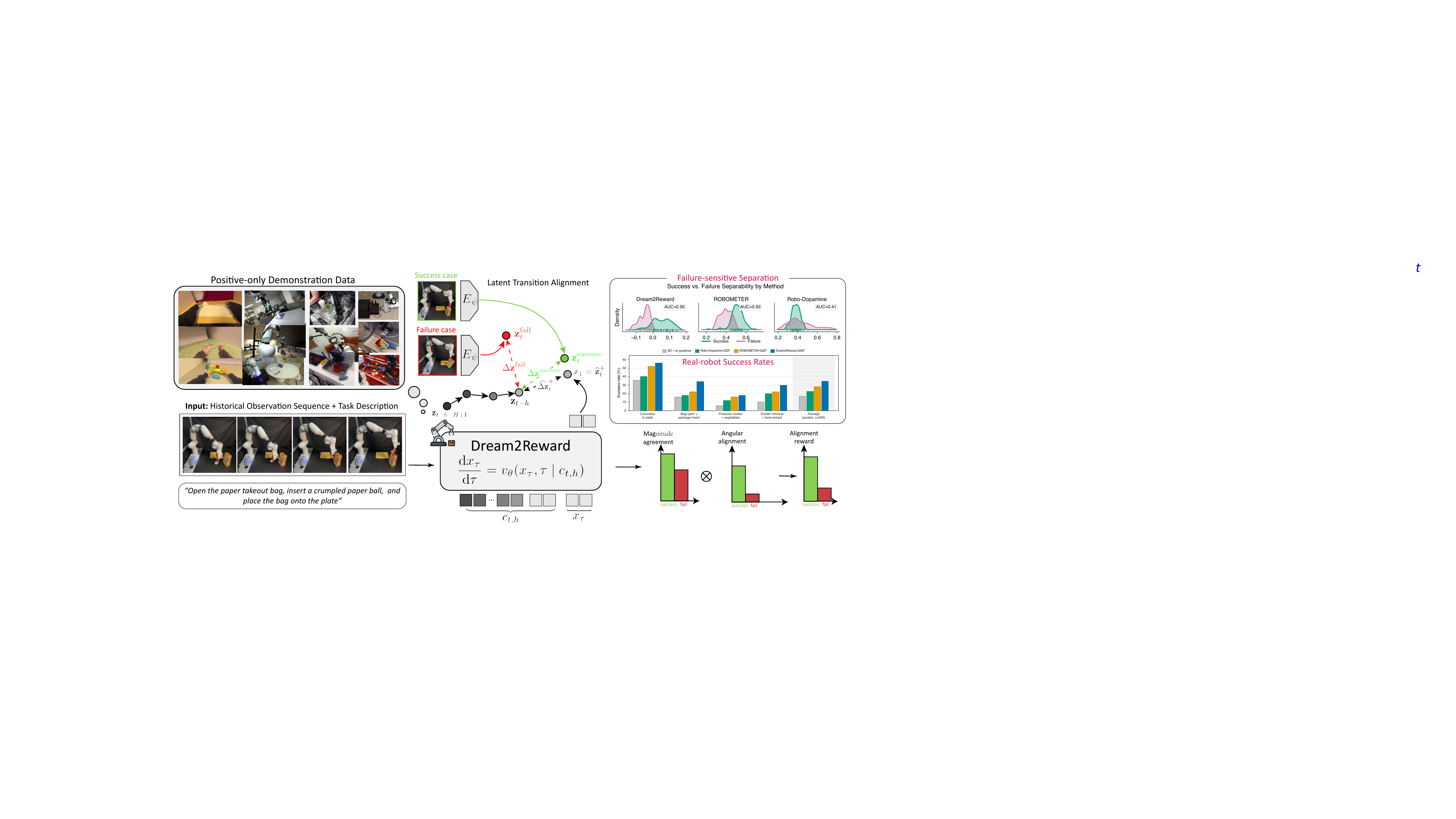}
\caption{Illustration of the Dream2Reward framework. From a language-conditioned visual history, a latent transition model trained only on positive demonstration data predicts the successful latent displacement $\widehat{\Delta z}^{+}_t$. At inference, a candidate transition is encoded into its own displacement $\Delta z_t^{\mathrm{cand}}$, and the reward is the \emph{angular alignment} and \emph{magnitude agreement} between the two: transitions along the success direction with an appropriate step size score high, whereas wrong-direction, overshooting, or stagnant transitions score low.}
\label{fig:method_overview}
\end{figure*}

\section{Introduction}

\IEEEPARstart{R}{eward} design remains a central bottleneck for applying reinforcement
learning (RL) to robotic manipulation, especially in contact-rich,
partially observed settings where sparse task success provides limited
feedback~\cite{kober2013reinforcement}. Large cross-embodiment robot
datasets and generalist vision-language robot policies have broadened
the manipulation skills that can be learned from diverse
demonstrations, reflecting a broader effort to transfer knowledge
across tasks and domains~\cite{openx,zhu2023transfer}. However,
improving a deployed policy through RL still requires dense, accurate,
and task-aligned reward signals.
The supervision available for this stage is inherently asymmetric. Successful demonstrations are increasingly abundant, while deployment-time interaction continually produces \emph{off-demonstration} transitions whose outcomes are not accompanied by dense reward annotations. A sparse success bit identifies completed episodes but gives little guidance before completion or after a mistake. Handcrafted shaping can fill this gap for one task, yet it requires access to task state and repeated engineering as objects, scenes, and embodiments change. Recent general-purpose robotic reward models instead learn visual-language rewards from robot trajectories, commonly by estimating task progress or process quality~\cite{robo_reward2026,robo_dopamine2025,robometer2026}.


Progress provides a meaningful ordering along successful trajectories: observations closer to task completion should receive higher scores than earlier ones. Once execution departs from those trajectories, however, the scalar becomes ambiguous. A spilled liquid, a dropped object, or an item pushed out of the workspace may correspond to no clear successful stage, making its progress value difficult to define. Even when a failed observation resembles a demonstrated late-stage state, neither state similarity nor progress reveals whether the transition producing it followed the correct task dynamics. The robot may move toward the target while displacing the object in the wrong direction, remain near the goal while stalled, or pass through a plausible state before overshooting. Thus, an observation may appear advanced even when the underlying transition is inconsistent with successful execution.


Positive demonstrations provide more than a temporal ordering of task progress. In addition to indicating which observations occur earlier or later along successful execution, they contain finite-horizon transitions that specify the task-relevant direction and magnitude of local change. Existing methods primarily exploit trajectory ordering or construct additional supervision through relabeling, temporal clipping, rewinding, and trajectory comparison~\cite{robo_reward2026,robometer2026,rewind2025}. By contrast, the local transition structure already present in successful trajectories has received less attention as a direct reference for scoring realized behavior.

To turn this transition structure into a usable reward signal, a model must predict the successful local change expected from the current task context. Generative video models provide a natural mechanism for constructing such a prediction from demonstrations~\cite{viper2023,diffusion_reward2024,genreward2026}. Prior reward formulations read out video-prediction likelihood,
conditional uncertainty, or alignment with generated
futures~\cite{viper2023,diffusion_reward2024,genreward2026}. Such quantities primarily evaluate the generator or its predicted endpoint, whereas policy learning requires feedback on the motion that the robot actually executes. This distinction raises a different question: \emph{can a predicted successful transition serve as a reference for scoring the transition actually executed?}


We answer this question with \textbf{Dream2Reward}, which learns a language-conditioned \emph{successful latent transition field} from positive demonstrations. Given the task goal and visual history available at a transition start, the model predicts the latent displacement expected under successful execution (Fig.~\ref{fig:method_overview}). Dream2Reward then scores the observed transition by comparing its displacement with this predicted reference in direction and magnitude. Because the reference is constructed without access to the scored endpoint, the resulting reward is causal and can penalize wrong-direction, stalled, and excessive motion at the transition level.

We evaluate Dream2Reward at the mechanism, reward-stream, and downstream-policy levels. Our main contributions are:

\begin{enumerate}
    \item We formulate positive-only robot reward learning as \emph{successful latent transition field estimation}, using successful demonstrations to learn the task-conditioned latent change expected under successful execution.

    \item We introduce a causal \emph{transition-alignment reward} that compares observed and predicted successful transitions through directional and magnitude agreement, providing dense feedback for diverse deviations without failure annotations, progress labels, or synthetic negatives.

    \item We analyze the alignment and perturbation-stability properties of the proposed reward, and evaluate its transition-level diagnostics, per-step signal quality, and failure sensitivity. We further demonstrate its utility in online policy learning and in real-robot offline policy learning across four tabletop tasks.
\end{enumerate}

\section{Related Work}


Learning rewards from demonstrations and human feedback has long been
studied through inverse and preference-based reinforcement
learning~\cite{arora2021survey}. Dynamical-system-based learning from
demonstrations represents demonstrated behavior as a motion field that
specifies how the system should evolve from each
state~\cite{zhang2023robust,zhang2026orbital}. Dream2Reward adopts a
related dynamics-centric view in latent space, but uses the predicted
successful transition field as a reward reference rather than directly
generating robot motion. Recent robotic reward models extend
demonstration-based supervision to vision-language manipulation,
combining images, language, and robot trajectories within a multimodal
learning framework~\cite{baltrusaitis2019multimodal}.

\subsection{General-Purpose Robotic Reward Models}

Early general-purpose robotic rewards are derived from visual similarity, goal-conditioned representations, or pretrained vision-language models. LIV learns language-image goal-conditioned value functions from action-free videos~\cite{liv2023}. Pretrained VLMs have also been used as zero-shot or weakly supervised reward models, through language prompts~\cite{vlmrm2024} or in-context progress prediction over frames~\cite{gvl2024}. These approaches exploit broad semantic knowledge and provide useful state-level goal or progress signals without task-specific reward engineering.

More recent large-scale models train directly on multimodal robot data. RoboReward predicts discretized progress with counterfactual relabeling and temporal clipping~\cite{robo_reward2026}; ReWiND learns progress rewards from few demonstrations and synthesizes failures by rewinding rollouts~\cite{rewind2025}; VLAC unifies action and reward in a critic model using negative and mismatched samples~\cite{vlac2025}; Robo-Dopamine builds a process reward from step-wise progress and multi-perspective fusion~\cite{robo_dopamine2025}; and ROBOMETER combines frame-level progress with trajectory-level preference supervision~\cite{robometer2026}. These methods primarily formulate reward learning in terms of state-level progress, preference, or success discrimination, often using labels, comparisons, failures, or constructed negatives. In contrast, Dream2Reward learns a task-conditioned latent transition field from successful demonstrations alone and scores each realized transition by its alignment with the predicted successful change.

\subsection{Diffusion-Based Reward Models}

Self-supervised visual learning provides general feature representations from unlabeled images and videos~\cite{jing2021selfsupervised}, while video-prediction methods model temporal structure by forecasting future observations~\cite{oprea2022video}. Modern generative formulations include diffusion and flow matching~\cite{croitoru2023diffusion,lipman2023flowmatching}, offering another route to reward signals. VIPER uses conditional video-prediction likelihoods as action-free rewards~\cite{viper2023}; Diffusion Reward derives rewards from the negative conditional entropy of a history-conditioned video diffusion model~\cite{diffusion_reward2024}; GenReward combines video-level latent alignment with frame-level goal-reaching rewards~\cite{genreward2026}; and DIFO uses a conditional diffusion discriminator's realness score over expert and agent transitions~\cite{difo2024}. These methods derive rewards from likelihood, uncertainty, generated-video alignment, or discrimination. Dream2Reward instead uses a generative model to predict task-conditioned successful change and evaluates realized transitions against this prediction, rather than directly reading out generative confidence or fit.

\subsection{Learned Rewards as Policy Objectives}

Learned reward models have increasingly been used as optimization objectives rather than only as trajectory evaluators. Goal-conditioned representations and vision-language models provide semantic rewards for policy learning~\cite{liv2023,vlmrm2024}, while video-prediction and diffusion models derive action-free rewards from predicted futures~\cite{viper2023,diffusion_reward2024}. Other approaches learn progress, critic, or preference signals from robot demonstrations and multimodal interaction data to guide policy improvement~\cite{rewind2025,vlac2025,robo_dopamine2025,robometer2026}. Across these formulations, the optimization signal is typically based on state-level progress, generative fit, or predicted success and preference. Dream2Reward instead provides a transition-level objective by comparing realized latent change with the successful change predicted from positive demonstrations.

\section{Rewards from Latent Transition Alignment}

We consider goal-conditioned manipulation as a partially observed
Markov decision process (POMDP), a standard abstraction for vision-based
robot reinforcement learning~\cite{kober2013reinforcement}. Let $s_t$ denote the underlying environment state, $o_t$ the visual observation, $a_t$ the robot action, $g$ the language-specified goal, and $\gamma$ the discount factor. Because $s_t$ is not directly observed, a policy $\pi(a_t \mid o_{0:t},g)$ acts on the observation history and maximizes the expected discounted return $\mathbb{E}_{\xi \sim \pi}[\sum_t \gamma^t r(s_t,a_t,s_{t+1};g)]$. To score the transition from $t-h$ to $t$, Dream2Reward uses a visual context of length $H$, denoted by $o_{t-h-H+1:t-h}$, together with the language goal $g$. This gives the history-dependent reward $r(o_{t-h-H+1:t-h},o_t,g)$.

Dream2Reward represents successful behavior as a task-conditioned \emph{successful latent transition field}, as illustrated in Fig.~\ref{fig:method_overview}. For a candidate transition from $t-h$ to $t$, the predictor conditions on the goal $g$ and the visual history $o_{t-h-H+1:t-h}$ to predict the successful endpoint latent $\widehat{z}^{+}_t$. Referencing this prediction to the encoded start latent $z_{t-h}$ gives the successful displacement
$\widehat{\Delta z}^{+}_t=\widehat{z}^{+}_t-z_{t-h}$. The observed endpoint $o_t$ is encoded separately as $z_t$, yielding the candidate displacement
$\Delta z_t^{\mathrm{cand}}=z_t-z_{t-h}$. The two displacements therefore share the same start point and temporal horizon.

The reward evaluates how well the candidate displacement agrees with the predicted successful displacement in direction and magnitude. Directional agreement distinguishes motion that follows the successful transition from opposing motion, while magnitude agreement captures insufficient or excessive change. The successful reference is predicted using only information available at the transition start, while the endpoint observation is used to construct the realized displacement. The resulting reward is therefore causal. Rewards are computed at every observation over overlapping $h$-step transitions, using the preceding $H$ frames as temporal context.

Let $\mathcal{D}^{+}=\{(\xi_i,g_i)\}_{i=1}^{N}$ denote a corpus of positive manipulation trajectories, where $\xi_i=(o_0^i,\ldots,o_{T_i}^i)$ and $g_i$ is the corresponding language goal. The corpus includes both success-labeled and demonstration-style trajectories, and contains no failure annotations, negative demonstrations, or synthetic negatives.

We encode each observation as $z_t=E_\psi(o_t)$ and construct the conditioning context
$c_t=C(z_{t-h-H+1:t-h},g)$ from the language goal and the $H$ frames preceding the transition. The language identifies the intended task, while the visual history captures recent motion and contact context that may disambiguate visually similar transition starts. Throughout reward-model training, the visual encoder $E_\psi$, the language encoder, and the conditioning module $C$ remain frozen.

For a fixed context $c_t$, a positive demonstration provides the target endpoint latent $z_t^{+}$ and the corresponding successful displacement
$\Delta z_t^{+}=z_t^{+}-z_{t-h}$. The successful latent transition field predicts the endpoint of this transition:
\begin{equation}
    F_\theta(c_t)\rightarrow\widehat{z}^{+}_{t},
    \qquad
    \widehat{\Delta z}^{+}_{t}
    =
    \widehat{z}^{+}_{t}-z_{t-h}.
    \label{eq:field_method}
\end{equation}
The next section describes how this predictor is learned.

\subsection{Learning the Successful Latent Transition}
\label{sec:learn_transition}

Dream2Reward does not regress the transition directly. Instead it predicts the successful \emph{endpoint latent state} with a conditional flow-matching model~\cite{lipman2023flowmatching} and obtains the transition by subtracting the observed start latent. Modeling the endpoint lets the field reuse the generative backbone of the visual representation model, keeps supervision in the encoder's native latent space, and defines every displacement relative to the actual start of the candidate being scored.

We index each reward by the endpoint of the transition being scored and train the field over multiple prediction horizons \(\kappa\in\mathcal{K}\). Given the frames preceding the transition and the offset code \(\kappa\), we form the past context and the target endpoint state
\begin{equation}
    c_{t,\kappa}
    =
    C\!\left(E_\psi(o_{t-\kappa-H+1:\,t-\kappa}), g, \kappa\right),
    \qquad
    z_t^{+}
    =
    E_\psi(o_t),
    \label{eq:flow_context_target}
\end{equation}
so the context contains only frames up to the transition start \(t-\kappa\), and the supervised endpoint \(z_t^{+}\) is the successful latent \(\kappa\) steps later. The corresponding successful displacement is
$\Delta z^{+}_{t,\kappa}=z_t^{+}-z_{t-\kappa}$.
Multi-horizon training exposes the field to successful motion at several temporal scales. At deployment, a single horizon $h$ is used for scoring, so all rewards compare displacements over a consistent physical duration; the remaining offsets serve only to enrich the training objective.

We train a conditional flow-matching velocity field that transports Gaussian noise to the successful endpoint latent using the constant endpoint-directed velocity target of rectified flow~\cite{liu2023rectifiedflow}. Let $\epsilon\sim\mathcal{N}(0,I)$ and $\tau\sim\mathcal{U}(0,1)$ denote the flow-matching time, distinct from the trajectory $\xi$. We define the interpolation
\begin{equation}
    x_\tau
    =
    (1-\tau)\epsilon+\tau z_t^{+}.
\end{equation}
The corresponding flow-matching objective is
\begin{equation}
    \mathcal{L}_{\mathrm{FM}}
    =
    \mathbb{E}
    \left[
    \left\|
    v_\theta(x_\tau,\tau\mid c_{t,\kappa})
    -
    \left(z_t^{+}-\epsilon\right)
    \right\|_2^2
    \right],
    \label{eq:flow_matching_loss}
\end{equation}
where the expectation is taken over $(o_{t-\kappa-H+1:t},g)\sim\mathcal{D}^{+}$, $\kappa\sim\mathcal{U}(\mathcal{K})$, $\tau$, and $\epsilon$. Conditioning on the visual history, language goal, and temporal offset makes the predicted endpoint specific to both the task and the transition start.

At inference, using the fixed deployment horizon, we write
$c_t=c_{t,h}$ and integrate the learned velocity field from Gaussian noise:
\begin{equation}
    \frac{d x_\tau}{d\tau}
    =
    v_\theta(x_\tau,\tau\mid c_t),
    \quad
    x_0\sim\mathcal{N}(0,I),
    \quad
    \widehat{\Delta z}^{+}_{t}
    =
    x_1-z_{t-h}.
    \label{eq:flow_generation}
\end{equation}
We use EMA weights, a fixed global initial-noise tensor, and deterministic 10-step ODE integration to produce a repeatable endpoint prediction for each context.

\subsection{Transition-Alignment Reward}
\label{sec:alignment_reward}

After training, Dream2Reward scores a candidate transition by comparing its latent displacement against the predicted successful displacement.
The candidate is the transition ending at \(t\) over the inference horizon \(h\). Concretely, given the past context \(c_t\) and the predicted successful displacement \(\widehat{\Delta z}^{+}_t\) from \eqref{eq:flow_generation}, we form the observed candidate displacement
\begin{equation}
    \Delta z_{t}^{\mathrm{cand}}
    =
    z_{t} - z_{t-h}.
    \label{eq:cand_disp}
\end{equation}
We measure the candidate's direction and magnitude relative to \(\widehat{\Delta z}^{+}_t\).

\subsubsection{Angular alignment}
The candidate transition should point in the same latent direction as successful transitions. We use the signed cosine similarity
\begin{equation}
    a_{t}
    =
    \frac{
        \langle \Delta z_{t}^{\mathrm{cand}},\, \widehat{\Delta z}^{+}_t \rangle
    }{
        \|\Delta z_{t}^{\mathrm{cand}}\|_2\,\|\widehat{\Delta z}^{+}_t\|_2 + \epsilon_{\mathrm{num}}
    }
    \;\in[-1,1].
    \label{eq:cosine}
\end{equation}

We use the signed cosine, \(r_{t}^{\mathrm{ang}}=a_{t}\), rather than rectifying it. Alignment near one denotes motion along the predicted success direction, orthogonal motion contributes little directional evidence, and opposing motion receives a negative score that can lower the value of an incorrect transition.

\subsubsection{Magnitude agreement}
A transition should also move with an appropriate latent step size. We compare displacement norms through their ratio and apply a symmetric log-ratio score,
\begin{equation}
\begin{aligned}
    \rho_{t}
    &=
    \frac{\|\widehat{\Delta z}^{+}_t\|_2 + \epsilon_{\mathrm{num}}}
         {\|\Delta z_{t}^{\mathrm{cand}}\|_2 + \epsilon_{\mathrm{num}}}, \\
    r_{t}^{\mathrm{mag}}
    &=
    \exp\!\left(
    -\frac{|\log \rho_{t}|}{\tau_m}
    \right),
\end{aligned}
\label{eq:r_mag}
\end{equation}
where we set \(\tau_m=1.0\). A raw norm difference would depend strongly on latent scale and would treat proportional deviations differently at different motion amplitudes. The absolute log-ratio instead gives the same penalty when candidate and reference magnitudes are exchanged. It therefore treats proportional under-motion and over-motion symmetrically, covering no motion or slipping as well as overshooting and abrupt latent jumps.

\subsubsection{Per-step raw reward}
We first combine direction and magnitude multiplicatively, then subtract an explicit magnitude-shortfall penalty and clamp,
\begin{align}
    s_{t} &= r_{t}^{\mathrm{ang}}\cdot r_{t}^{\mathrm{mag}}, \label{eq:per_cand}\\
    r_t^{\mathrm{raw}}
    &=
    \mathrm{clip}\!\left(\,s_{t} - \lambda_{\mathrm{pen}}\,(1-r_{t}^{\mathrm{mag}}),\;-1,\;+1\right),
    \label{eq:final_score}
\end{align}
with \(\lambda_{\mathrm{pen}} = 0.3\). Multiplication implements a conjunction: a candidate cannot obtain a high core score by matching direction while moving at an implausible scale, or by matching the expected scale while moving elsewhere. In contrast, an additive combination can allow one strong component to compensate for the other. The shortfall term strengthens the penalty for severe magnitude mismatch, and clamping bounds the raw model output in \([-1,1]\). We retain \(r_t^{\mathrm{ang}}\) and \(r_t^{\mathrm{mag}}\) separately for mechanism diagnostics.

The three stages of the score correspond to distinct deviations from successful execution. If the robot reverses, drifts laterally, or moves an object toward the wrong target, the signed angular term falls even when the candidate norm is typical. If the scene is nearly static because contact is missed or the object is stuck, the candidate norm is too small; if an object is dropped, flung, or pushed past its intended configuration, the norm can be too large. In both cases the magnitude term falls even if the displacement has a positive projection on the successful direction. A transition that is moderately wrong in both respects is further suppressed by the product.

The context-conditioned successful displacement provides a common representation for these events without category-specific supervision. The same equations therefore apply across tasks and failure modes, including local recovery: when a later candidate again follows the predicted successful field, its reward can rise without waiting for an episode-level outcome.

\subsection{Reward Normalization for Policy Learning}
\label{sec:calibration}

Let $q_t$ denote a reward model's native causal output, with $q_t=r_t^{\mathrm{raw}}$ for Dream2Reward and $q_t=p_t$ for a progress-based baseline. Before policy or critic learning, each reward stream is standardized using task-specific statistics:
\begin{equation}
    r_t=\frac{q_t-\mu}{\sigma},
    \label{eq:rl_reward}
\end{equation}
where $\mu$ and $\sigma$ are estimated once from data available before downstream optimization and then held fixed. The standardized reward $r_t$ is used directly by online RL or as the transition-level supervision for offline critic learning.

This affine transformation preserves the temporal ordering and local extrema of each native reward stream while placing different methods on comparable offsets and scales. Fixing the normalization statistics also keeps the reward mapping unchanged throughout downstream optimization. The normalization data used in each experimental setting are specified in the corresponding experimental protocol.

Algorithm~\ref{alg:Dream2Reward} summarizes the overall procedure. Dream2Reward trains the latent transition field using only positive transitions from $\mathcal{D}^{+}$, and scores candidate transitions through latent transition alignment.

\begin{algorithm}[t]
\caption{Dream2Reward: Training and Reward Computation}
\label{alg:Dream2Reward}
\begin{algorithmic}[1]
\Require Positive demonstrations \(\mathcal{D}^{+}\), frozen visual and language encoders, non-trainable context construction \(C\), trainable flow field \(v_\theta\), horizon set \(\mathcal{K}\), inference horizon \(h\)

\Statex \textbf{Training}
\For{each training iteration}
    \State Sample \((\xi,g)\sim\mathcal{D}^{+}\), endpoint \(t\), offset \(\kappa\sim\mathcal{U}(\mathcal{K})\)
    \State Encode past frames and endpoint with \(E_\psi\)
    \State Build past context \(c_{t,\kappa}\) via \eqref{eq:flow_context_target}
    \State Fit \(v_\theta\) to the endpoint latent \(z_t^{+}\) via \eqref{eq:flow_matching_loss}
\EndFor

\Statex \textbf{Reward Computation}
\For{each video \(v\) and step \(t\)}
    \State Build past context \(c_t\) at horizon \(\kappa=h\)
    \State Predict endpoint \(x_1\) from the fixed global noise \(x_0\) via \eqref{eq:flow_generation}, set \(\widehat{\Delta z}^{+}_t=x_1-z_{t-h}\)
    \State Form candidate displacement \(\Delta z_t^{\mathrm{cand}}\) using Eq.~\eqref{eq:cand_disp}
    \State Compute \(r_t^{\mathrm{raw}}\) using Eqs.~\eqref{eq:cosine}--\eqref{eq:final_score}
\EndFor
\State \Return causal per-step raw reward sequence $r_t^{\mathrm{raw}}$
\end{algorithmic}
\end{algorithm}

\subsection{Implementation Details}
\label{sec:impl}

All experiments use the following fixed instantiation. The frozen visual encoder \(E_\psi\) is an RAEv2 representation autoencoder~\cite{singh2026raev2} with a DINOv3 ViT-L/16 backbone~\cite{dinov3}; it maps each \(256{\times}256\) frame to a \((1024,16,16)\) latent. The frozen language encoder is Qwen3-0.6B~\cite{qwen3}. The conditioning construction \(C\) is non-trainable, and only the velocity field \(v_\theta\) is optimized with \eqref{eq:flow_matching_loss}. Inference uses EMA weights, the frozen global noise tensor, and deterministic 10-step ODE integration. We use \(\epsilon_{\mathrm{num}}=10^{-6}\), \(\tau_m=1.0\), and \(\lambda_{\mathrm{pen}}=0.3\), shared across tasks, which bounds the raw score in \([-1,1]\).

\emph{Training data and temporal grid.} We train on positive manipulation demonstrations from EgoDex~\cite{egodex} and Open X-Embodiment~\cite{openx}, sampled on a fixed \(4\,\mathrm{fps}\) grid. The history contains \(H{=}4\) frames, training uses offsets \(\mathcal{K}=\{1,2,4,8,16\}\), and deployment uses \(h{=}2\), corresponding to a \(0.5\)s candidate transition. We treat both datasets as positive corpora: EgoDex contributes egocentric \emph{human} manipulation videos. Within Open X-Embodiment, we use success-labeled trajectories
when outcome metadata are available and otherwise use
demonstration-style trajectories as positive examples.

For visualization only, reward traces are causally smoothed with an EMA coefficient of $0.35$; all RL experiments use the unsmoothed per-step reward.

\section{Properties of Transition-Alignment Rewards}
\label{sec:properties}

We now characterize the geometry and local stability of transition alignment. Write the observed latent transition as \(u=z_{t}-z_{t-h}\) and the predicted successful transition as \(v=\widehat{\Delta z}^{+}_t\). The analysis omits the numerical stabilizer \(\epsilon_{\mathrm{num}}\) and concerns nonzero transitions; the implementation retains \(\epsilon_{\mathrm{num}}=10^{-6}\). The resulting non-degenerate core score is
\begin{equation}
    r(u,v)=\cos(u,v)\,\exp\!\left(-\tfrac{1}{\tau_m}\left|\log\tfrac{\|u\|}{\|v\|}\right|\right),
    \label{eq:prop_reward}
\end{equation}
with \(\tau_m>0\) and \(u,v\neq 0\). The corollary below extends this non-degenerate score through the magnitude-shortfall penalty and clipping.

\begin{proposition}[Alignment maximization and deviation penalty]
\label{prop:align}
For any nonzero \(u,v\), the reward \eqref{eq:prop_reward} satisfies \(r(u,v)\le 1\), with equality if and only if \(u,v\) are positively colinear and \(\|u\|=\|v\|\); the angular and magnitude factors act \emph{separately}, the former maximized by colinearity and the latter independently penalizing scale mismatch. Quantitatively, an angular deviation \(\angle(u,v)\ge\alpha\) (with \(\alpha\le\pi/2\)) caps the reward at \(r(u,v)\le\cos\alpha<1\), and a log-magnitude deviation \(|\log(\|u\|/\|v\|)|\ge\delta\) caps it at \(r(u,v)\le\exp(-\delta/\tau_m)<1\); if both hold, the tighter product bound \(r(u,v)\le\cos\alpha\,\exp(-\delta/\tau_m)\) applies. For \(\alpha>\pi/2\) the direction is already wrong (\(\cos(u,v)<0\)) and \(r\le 0\), strictly below the maximum.
\end{proposition}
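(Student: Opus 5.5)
The plan is to write the reward \eqref{eq:prop_reward} as a product $r(u,v)=c\cdot m$ of an angular factor $c=\cos(u,v)\in[-1,1]$ and a magnitude factor $m=\exp(-|\ell|/\tau_m)\in(0,1]$, where $\ell=\log(\|u\|/\|v\|)$ is well defined because $u,v\neq 0$. The factor $c$ depends only on the directions $u/\|u\|$ and $v/\|v\|$, and $m$ depends only on the ratio of norms. Every claim in Proposition~\ref{prop:align} then reduces to elementary bounds on each factor, combined through the sign structure of the product. This factorization already gives the ``separate action'' statement: rescaling $u$ by a positive scalar leaves $c$ unchanged, and rotating $u$ while keeping $\|u\|$ fixed leaves $m$ unchanged.

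\emph{Upper bound and equality.} By Cauchy--Schwarz, $c\le 1$, with equality if and only if $u=\lambda v$ for some $\lambda>0$, that is, positive colinearity. Since $|\ell|\ge 0$ and $\tau_m>0$, we have $m\le 1$, with equality if and only if $\ell=0$, i.e.\ $\|u\|=\|v\|$. Two cases give $r\le 1$. If $c\le 0$, then $r\le 0<1$. If $c>0$, both factors lie in $(0,1]$, so their product is at most $1$. Equality forces $c>0$ and $cm=1$, and since both factors are at most $1$ this requires $c=m=1$. That yields exactly the stated characterization. The converse direction, that positive colinearity with equal norms gives $r=1$, is immediate.

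\emph{Quantitative caps.} The angle $\angle(u,v)=\arccos c\in[0,\pi]$ is defined through $\arccos$, and cosine is decreasing on $[0,\pi]$. So $\angle(u,v)\ge\alpha$ implies $c\le\cos\alpha$. For $\alpha\le\pi/2$ we have $\cos\alpha\ge 0$. If $c\ge 0$, then $r=cm\le c\le\cos\alpha$ because $m\le 1$. If $c<0$, then $r<0\le\cos\alpha$. Strictness $\cos\alpha<1$ needs $\alpha>0$, which I read as implicit in calling it a deviation. For the magnitude cap, $|\ell|\ge\delta$ gives $m\le e^{-\delta/\tau_m}$. Then $r\le |c|\,m\le m$, and this is strict below $1$ when $\delta>0$. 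For the joint bound, if $c\ge 0$ we multiply the two nonnegative bounds, $c\le\cos\alpha$ and $m\le e^{-\delta/\tau_m}$. If $c<0$, then $r<0$, which is at most the nonnegative right-hand side. For $\alpha>\pi/2$, $c\le\cos\alpha<0$ and $m>0$ give $r<0$, and in particular $r\le 0<1$.

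The main obstacle is that the argument runs on signs rather than analysis. Multiplying inequalities is only valid for nonnegative quantities, so every bound has to split on the sign of $c$. The case $\alpha>\pi/2$ is separated out precisely because there $\cos\alpha<0$, and the naive product bound $c\,m\le\cos\alpha\,e^{-\delta/\tau_m}$ would reverse direction. I will handle this by making the case split $c\ge 0$ versus $c<0$ explicit at each step.
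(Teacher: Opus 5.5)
Your proposal is correct and follows the same route as the paper: factor $r$ into the cosine term in $[-1,1]$ and the magnitude term in $(0,1]$, bound each factor separately, and combine the bounds. Two details that the paper's proof leaves implicit are made explicit in yours, so your version is slightly more careful: the case split on the sign of $\cos(u,v)$ before multiplying inequalities, and the remark that the strict bounds $\cos\alpha<1$ and $e^{-\delta/\tau_m}<1$ need $\alpha>0$ and $\delta>0$.
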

\noindent\emph{Proof.} Write \(r=g\,h\) with \(g=\cos(u,v)\in[-1,1]\) and \(h=\exp(-|\log(\|u\|/\|v\|)|/\tau_m)\in(0,1]\); \(g=1\) iff the directions coincide and \(h=1\) iff \(\|u\|=\|v\|\), so \(r\le 1\) with equality only when both hold. For \(\alpha\le\pi/2\) an angular deviation \(\ge\alpha\) gives \(g\le\cos\alpha\), so \(r=g\,h\le\cos\alpha\) since \(h\le 1\); a log-magnitude deviation \(\ge\delta\) gives \(h\le\exp(-\delta/\tau_m)\), so \(r\le\exp(-\delta/\tau_m)\) since \(g\le 1\); both together give the product bound. For \(\alpha>\pi/2\), \(g<0\) so \(r\le 0\).  \hfill\(\square\)

Proposition~\ref{prop:align} formalizes the conjunction implemented by the two reward factors. Achieving the maximum requires both the successful direction and the successful scale; satisfying either condition alone is insufficient. The quantitative bounds also distinguish the failure modes: a turn away from the reference is controlled by the angular factor, while stalling or overshooting increases the log-magnitude deviation. When both occur, their penalties compound through the product.

\begin{proposition}[Transition alignment resolves state-progress ambiguity]
\label{prop:ambiguity}
Consider successful and failed candidate transitions from the same past context, with identical scalar progress at their endpoints, \(p(z_t^{\mathrm{succ}})= p(z_t^{\mathrm{fail}})\), and let \(v\) be the predicted successful transition for that context. For any state-only reward that is solely a function of this scalar progress, if
\begin{equation}
    \cos(\Delta z_t^{\mathrm{succ}},v) \;>\; \cos(\Delta z_t^{\mathrm{fail}},v),
    \label{eq:prop_amb}
\end{equation}
then any such state-only progress reward, being a function of the (identical) scalar progress values alone, assigns the two transitions equal reward, whereas the alignment reward \eqref{eq:prop_reward} assigns strictly greater reward to the successful transition whenever their magnitude factors are equal.
\end{proposition}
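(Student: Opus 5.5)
The plan splits the claim into its two halves, each handled directly from the definitions.

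\emph{State-only progress reward.} Such a reward has the form \(R(z_t)=\phi(p(z_t))\) for some function \(\phi:\mathbb{R}\to\mathbb{R}\). It depends on the transition only through the scalar \(p\) evaluated at the endpoint. The hypothesis \(p(z_t^{\mathrm{succ}})=p(z_t^{\mathrm{fail}})\) therefore gives \(\phi(p(z_t^{\mathrm{succ}}))=\phi(p(z_t^{\mathrm{fail}}))\) at once. No property of \(\phi\) is needed, and the shared past context and the cosine condition \eqref{eq:prop_amb} play no role in this half. I would also note that any additional dependence on the common start latent \(z_{t-h}\) does not change the conclusion, because that latent is shared.

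\emph{Alignment reward.} I would invoke the factorization used in the proof of Proposition~\ref{prop:align}, writing \(r(u,v)=g(u,v)\,h(u,v)\). Here \(g=\cos(u,v)\) and \(h=\exp(-|\log(\|u\|/\|v\|)|/\tau_m)\). Both candidates share the same start latent \(z_{t-h}\) and the same context \(c_t\). Hence they are scored against the same reference \(v\), because \(v\) is computed deterministically from \(c_t\) with the fixed noise tensor. Setting \(u_s=\Delta z_t^{\mathrm{succ}}\) and \(u_f=\Delta z_t^{\mathrm{fail}}\), the assumption of equal magnitude factors gives \(h(u_s,v)=h(u_f,v)=:h_0\). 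From the definition of \(h\) we have \(h_0\in(0,1]\), and in particular \(h_0>0\), since the exponential of a finite number is positive and \(u_s,u_f,v\neq0\). Multiplying the strict inequality \eqref{eq:prop_amb} by \(h_0>0\) preserves it, so \(r(u_s,v)>r(u_f,v)\).

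\emph{Main point to verify.} The one step needing care is that \(h_0\) is strictly positive, not merely nonnegative. A zero factor would collapse the strict inequality to equality. Positivity requires the non-degeneracy assumption \(u_s,u_f,v\neq 0\) stated for \eqref{eq:prop_reward}: only then is the log-ratio finite and the cosine defined. I would state this explicitly.

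I would also note that equal magnitude factors is equivalent to \(|\log(\|u_s\|/\|v\|)|=|\log(\|u_f\|/\|v\|)|\). This holds in particular when \(\|u_s\|=\|u_f\|\), and that special case gives a concrete sufficient condition.
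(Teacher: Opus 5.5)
Your proposal is correct and follows essentially the same route as the paper: the progress reward is equal because it depends only on the identical scalar progress values, and the alignment reward is strictly larger because it is strictly increasing in the cosine at a fixed magnitude factor. Your explicit check that the shared magnitude factor $h_0$ is strictly positive is what justifies the paper's ``strictly increasing'' step, so it makes that step explicit rather than changing the argument.
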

\noindent\emph{Proof.} A progress-based reward depends on its input only through the scored state's progress value; since these coincide by assumption, it returns equal reward for both transitions irrespective of their direction. The alignment reward is strictly increasing in \(\cos(\cdot,v)\) at a fixed magnitude factor, so \eqref{eq:prop_amb} makes it strictly larger for the successful transition. The separating quantity is exactly the directional information that a scalar progress value discards. \hfill\(\square\)

Proposition~\ref{prop:ambiguity} isolates the information gained by moving from states to transitions. Two endpoints may occupy the same scalar progress level even when they were reached through different latent directions. A state-progress function must collapse these cases by construction, while the predicted vector provides an oriented local reference. Magnitude agreement then complements this directional separation when the two candidates also differ in how far they move.

\begin{proposition}[Stability under field perturbation]
\label{prop:stability}
Fix \(u\) with \(\|u\|>0\), let \(v\) be the nominal predicted successful transition, and suppose its perturbation \(\widetilde v\) satisfies \(\|\widetilde v-v\|\le\varepsilon\) with \(\|v\|,\|\widetilde v\|\ge m>0\). Then
\begin{equation}
    \big|r(u,\widetilde v)-r(u,v)\big|\;\le\;L\,\varepsilon,
    \label{eq:prop_lip}
\end{equation}
for the explicit constant \(L=\tfrac{2}{m}+\tfrac{1}{\tau_m}\tfrac{1}{m}\), which depends only on the lower norm bound \(m\) and the temperature \(\tau_m\).
\end{proposition}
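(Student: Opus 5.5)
The plan is to prove \eqref{eq:prop_lip} by splitting the product structure of \eqref{eq:prop_reward} as in the proof of Proposition~\ref{prop:align}. Write \(r(u,v)=g(v)\,h(v)\), where \(g(v)=\cos(u,v)\in[-1,1]\) and \(h(v)=\exp(-|\log(\|u\|/\|v\|)|/\tau_m)\in(0,1]\). Write \(\widetilde g,\widetilde h\) for the same quantities at \(\widetilde v\). The standard product telescoping
\[
|\widetilde g\,\widetilde h-g\,h|\le|\widetilde g-g|\,\widetilde h+|g|\,|\widetilde h-h|\le|\widetilde g-g|+|\widetilde h-h|
\]
uses \(\widetilde h\le 1\) and \(|g|\le 1\). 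It then suffices to show two separate bounds: \(|\widetilde g-g|\le 2\varepsilon/m\) and \(|\widetilde h-h|\le \varepsilon/(\tau_m m)\). Their sum is exactly \(L\varepsilon\).

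\emph{Angular factor.} Let \(\hat u=u/\|u\|\), so \(g=\langle\hat u,\,v/\|v\|\rangle\) with \(\|\hat u\|=1\). By Cauchy--Schwarz, \(|\widetilde g-g|\le\|\widetilde v/\|\widetilde v\|-v/\|v\|\|\); the fixed vector \(u\) drops out entirely. I then bound the difference of normalized vectors by inserting \(\widetilde v/\|v\|\):
\[
\Bigl\|\tfrac{\widetilde v}{\|\widetilde v\|}-\tfrac{v}{\|v\|}\Bigr\|
\le\|\widetilde v\|\,\Bigl|\tfrac{1}{\|\widetilde v\|}-\tfrac{1}{\|v\|}\Bigr|+\tfrac{\|\widetilde v-v\|}{\|v\|}
=\tfrac{|\|v\|-\|\widetilde v\||}{\|v\|}+\tfrac{\|\widetilde v-v\|}{\|v\|}.
\]
By the reverse triangle inequality and \(\|v\|\ge m\), this is at most \(2\varepsilon/m\).

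\emph{Magnitude factor.} Set \(a=\log\|u\|-\log\|v\|\) and \(\widetilde a=\log\|u\|-\log\|\widetilde v\|\), so that \(h=\phi(a)\) with \(\phi(x)=e^{-|x|/\tau_m}\). This step is where I expect the only real subtlety: \(\phi\) has a kink at \(x=0\), the point where the magnitudes match, so a derivative-based argument needs care. I avoid differentiating \(|\cdot|\) altogether. The map \(y\mapsto e^{-y/\tau_m}\) on \(y\ge 0\) has derivative bounded by \(1/\tau_m\), so it is \(1/\tau_m\)-Lipschitz. Combining this with \(||a|-|\widetilde a||\le|a-\widetilde a|\) gives \(|\widetilde h-h|\le|a-\widetilde a|/\tau_m\). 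Finally, \(|a-\widetilde a|=|\log\|\widetilde v\|-\log\|v\||\). By the mean value theorem for \(\log\) on an interval contained in \([m,\infty)\), where \(\log'\le 1/m\), and by the reverse triangle inequality, this is at most \(\varepsilon/m\).

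Combining the two bounds through the telescoping inequality yields \(|r(u,\widetilde v)-r(u,v)|\le(2/m+1/(\tau_m m))\varepsilon\). The constant depends only on \(m\) and \(\tau_m\), and it does not even depend on \(\|u\|\), since \(u\) enters only through the unit vector \(\hat u\) and the cancelling term \(\log\|u\|\).
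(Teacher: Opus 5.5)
Your proof is correct and follows the paper's argument essentially step for step: the same factorization \(r=g\,h\), the same \(2/m\) bound on the difference of normalized vectors (which you derive explicitly, where the paper only cites a direct endpoint comparison), and the same combination of the \(1/\tau_m\)-Lipschitz exponential on \([0,\infty)\), the inequality \(\bigl||a|-|b|\bigr|\le|a-b|\), and \(\bigl|\log\|v\|-\log\|\widetilde v\|\bigr|\le\varepsilon/m\). The only difference is cosmetic: you add and subtract \(g(v)h(\widetilde v)\) rather than \(g(\widetilde v)h(v)\). Your remark that the constant does not depend on \(\|u\|\) is also correct.
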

\noindent\emph{Proof.} Write \(r(u,w)=g(w)h(w)\), where \(g(w)=\cos(u,w)\in[-1,1]\) and \(h(w)=\exp(-|\log(\|u\|/\|w\|)|/\tau_m)\in(0,1]\). Direct endpoint comparison gives \(\|v/\|v\|-\widetilde v/\|\widetilde v\|\|\le(2/m)\|v-\widetilde v\|\), hence \(|g(\widetilde v)-g(v)|\le(2/m)\|\widetilde v-v\|\). It also gives \(\big|\log\|v\|-\log\|\widetilde v\|\big|\le |\|v\|-\|\widetilde v\||/\min(\|v\|,\|\widetilde v\|)\le\|v-\widetilde v\|/m\). Together with \(\big||a|-|b|\big|\le|a-b|\), this inequality and the \(1/\tau_m\)-Lipschitz continuity of \(x\mapsto e^{-x/\tau_m}\) on \([0,\infty)\) yield \(|h(\widetilde v)-h(v)|\le\|\widetilde v-v\|/(\tau_m m)\). Since \(|g|,h\le1\), adding and subtracting \(g(\widetilde v)h(v)\) gives \(|r(u,\widetilde v)-r(u,v)|\le(2/m+1/(\tau_m m))\|\widetilde v-v\|\le L\varepsilon\). \hfill\(\square\)

Proposition~\ref{prop:stability} shows that, away from near-zero transition norms, bounded perturbations of the predicted transition field induce bounded changes in the reward. The sensitivity is controlled by the lower norm bound \(m\) and temperature \(\tau_m\).

\begin{corollary}[Stability of the post-processed non-degenerate score]
\label{cor:post_stability}
Under the assumptions of Proposition~\ref{prop:stability}, let
\(q(u,v)=\exp(-|\log(\|u\|/\|v\|)|/\tau_m)\) and define
\begin{equation}
 r_{\mathrm{post}}(u,v)=\mathrm{clip}\!\left(r(u,v)-\lambda_{\mathrm{pen}}(1-q(u,v)),-1,1\right).
\end{equation}
Then
\begin{equation}
 \left|r_{\mathrm{post}}(u,\widetilde v)-r_{\mathrm{post}}(u,v)\right|
 \le L_{\mathrm{post}}\varepsilon.
\end{equation}
Here \(L_{\mathrm{post}}=2/m+(1+\lambda_{\mathrm{pen}})/(\tau_m m)\).
\end{corollary}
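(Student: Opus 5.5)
The plan is to reduce the corollary to Proposition~\ref{prop:stability}, together with the magnitude-factor estimate established inside its proof. Write the pre-clip quantity as
\[
\phi(u,w)=r(u,w)-\lambda_{\mathrm{pen}}\bigl(1-q(u,w)\bigr)=r(u,w)+\lambda_{\mathrm{pen}}\,q(u,w)-\lambda_{\mathrm{pen}} ,
\]
so that \(r_{\mathrm{post}}=\mathrm{clip}(\phi,-1,1)\). Clipping to \([-1,1]\) is the projection onto an interval. It is therefore \(1\)-Lipschitz on \(\mathbb{R}\), and it suffices to bound \(|\phi(u,\widetilde v)-\phi(u,v)|\).

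\emph{Step 1.} The constant \(-\lambda_{\mathrm{pen}}\) cancels in the difference, so the triangle inequality gives
\[
|\phi(u,\widetilde v)-\phi(u,v)|\le |r(u,\widetilde v)-r(u,v)|+\lambda_{\mathrm{pen}}\,|q(u,\widetilde v)-q(u,v)|.
\]

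\emph{Step 2.} The first term is at most \((2/m+1/(\tau_m m))\varepsilon\) by Proposition~\ref{prop:stability}.

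\emph{Step 3.} The function \(q\) coincides with the magnitude factor \(h\) in that proof. The same argument applies to it: the bound \(|\log\|v\|-\log\|\widetilde v\||\le\|v-\widetilde v\|/m\) uses \(\|v\|,\|\widetilde v\|\ge m\), the map \(x\mapsto|\log\|u\|-x|\) is \(1\)-Lipschitz, and \(e^{-x/\tau_m}\) is \(1/\tau_m\)-Lipschitz on \([0,\infty)\). Together these give \(|q(u,\widetilde v)-q(u,v)|\le\varepsilon/(\tau_m m)\).

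Summing the two bounds yields
\[
\frac{2}{m}+\frac{1}{\tau_m m}+\frac{\lambda_{\mathrm{pen}}}{\tau_m m}=\frac{2}{m}+\frac{1+\lambda_{\mathrm{pen}}}{\tau_m m}=L_{\mathrm{post}},
\]
which is exactly the stated constant.

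The main obstacle is conceptual rather than computational, and it concerns the first term. Proposition~\ref{prop:stability} bounds the perturbation of the product \(gh\) as a whole, and the penalty term perturbs \(h\) a second time. One might worry that adding the two would double-count. In fact it does not matter, because only an upper bound is needed and the triangle inequality makes the contributions add. The only care required is to quote the magnitude estimate for \(q\) separately, rather than trying to extract it from the statement of Proposition~\ref{prop:stability}, which gives only the combined bound. Finally, the \(1\)-Lipschitz property of the clip is verified by a short case check: both inputs inside \([-1,1]\), one inside and one outside, or both outside on the same or opposite sides.
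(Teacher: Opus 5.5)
Your proposal is correct and takes essentially the same route as the paper. The paper also uses $1$-Lipschitz clipping, the triangle inequality on the pre-clip quantity, Proposition~\ref{prop:stability} for the $r$ term, and a separate log-norm and exponential Lipschitz estimate giving $|q(u,\widetilde v)-q(u,v)|\le\varepsilon/(\tau_m m)$, which sum to exactly $L_{\mathrm{post}}$.
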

\noindent\emph{Proof.} The clipping operator is $1$-Lipschitz. Proposition~\ref{prop:stability} gives the endpoint bound $2/m+1/(\tau_m m)$ for $r(u,\cdot)$. For $q$, the same endpoint inequality
\(
|\log\|v\|-\log\|\widetilde v\||\le\|v-\widetilde v\|/m
\)
and the $1/\tau_m$-Lipschitz continuity of the exponential give
\(|q(u,\widetilde v)-q(u,v)|\le\|\widetilde v-v\|/(\tau_m m)\).
Applying the triangle inequality before clipping gives the stated constant. \hfill\(\square\)

Thus the local perturbation bound extends from the core alignment score to its magnitude-penalized and clipped non-degenerate form.

\section{Experiments}
\label{sec:experiments}


The experiments follow a progression from validating the reward mechanism to assessing its downstream utility. Section~\ref{sec:diagnostics} first examines whether latent transition alignment distinguishes successful execution from task-mismatched and failed-episode transitions. Section~\ref{sec:fail_sensitive} then evaluates whether the resulting reward provides reliable and timely separation between successful and failed behavior, while Section~\ref{sec:ablation} identifies the contributions of direction, magnitude, and language conditioning. Section~\ref{sec:libero_online} tests whether these properties translate into improved online policy learning, and Section~\ref{sec:online_why} analyzes the temporal reward characteristics underlying the observed learning behavior. Finally, Section~\ref{sec:real_robot_qgf} evaluates whether Dream2Reward can provide effective zero-shot labels for real-robot offline RL, and Section~\ref{sec:offline_why} examines whether the resulting policy gains are supported by stronger transition-level credit assignment.

Unless otherwise specified, reward-level comparisons include Dream2Reward, ROBOMETER~\cite{robometer2026}, and Robo-Dopamine~\cite{robo_dopamine2025}, evaluated through their causal per-step outputs. All reward models remain frozen throughout evaluation and downstream policy learning. Because the applicable baselines and score-processing protocols differ across reward diagnostics, online RL, and offline RL, each subsection specifies its corresponding comparison protocol.


\subsection{Latent Transition Alignment Diagnostics}
\label{sec:diagnostics}


For the mechanism and failure-separation analyses, we construct a real-robot scoring set over the four manipulation tasks described in Sec.~\ref{sec:real_robot_qgf}. For each task, the set contains 50 successful human demonstrations and 50 rollouts from a fine-tuned policy, yielding 400 trajectories in total. Episode-level outcome annotation identifies 234 successful and 166 failed trajectories. All 200 human demonstrations are successful, whereas 34 of the 200 policy rollouts succeed and 166 fail.

We score three groups: transitions from successful demonstrations; wrong-language scores obtained by evaluating successful clips under a mismatched instruction; and transitions sampled from failed policy episodes. We aggregate transition scores within each trajectory before testing. Correct- versus wrong-language scores are paired and use a two-sided Wilcoxon signed-rank test, whereas successful- versus failed-episode scores are unpaired and use a two-sided Mann--Whitney \(U\) test.  Figure~\ref{fig:alignment_diagnostics} shows \(r_t^{\mathrm{raw}}\), \(r_t^{\mathrm{ang}}\), and \(|\log\rho_t|\). Successful transitions receive higher raw reward than either comparison group; wrong-language and failed-episode transitions also show larger magnitude mismatch and lower angular alignment.

\begin{figure}[tb]
    \centering
    \includegraphics[width=0.9\linewidth]{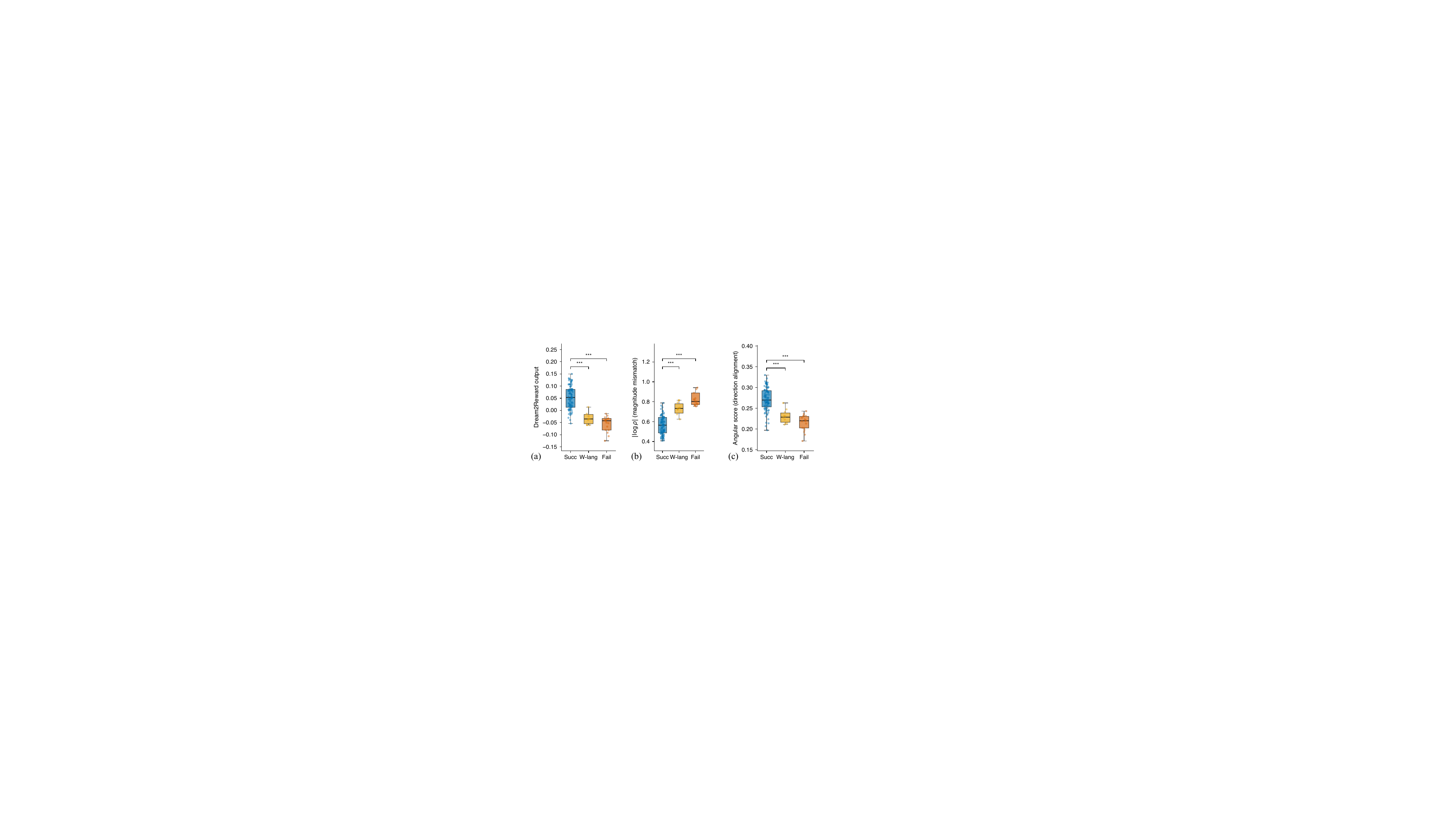}
    \caption{Latent transition alignment diagnostics for successful, wrong-language, and failed-episode transitions: (a) raw reward \(r_t^{\mathrm{raw}}\), (b) absolute log magnitude ratio \(|\log\rho_t|\), and (c) angular score \(r_t^{\mathrm{ang}}\). Significance is computed on per-trajectory aggregates: paired correct- versus wrong-language scores use a two-sided Wilcoxon signed-rank test, while successful- versus failed-episode scores use a two-sided Mann--Whitney \(U\) test; \(\ast\ast\ast\): \(p<0.001\), \(\ast\ast\): \(p<0.01\), \(\ast\): \(p<0.05\), n.s.: \(p\ge0.05\).}
\label{fig:alignment_diagnostics}
\end{figure}

Wrong-language and failed-episode groups have lower angular scores, so their realized motion points less consistently along the task-conditioned success field. Their larger absolute log-ratios also indicate less agreement in the amount of change. These component differences are consistent with the lower raw rewards observed for both groups, neither of which appears in reward-model training.

\subsection{Failure-Sensitive Reward Validation}
\label{sec:fail_sensitive}

\begin{figure}[tb]
    \centering
    \includegraphics[width=0.84\linewidth]{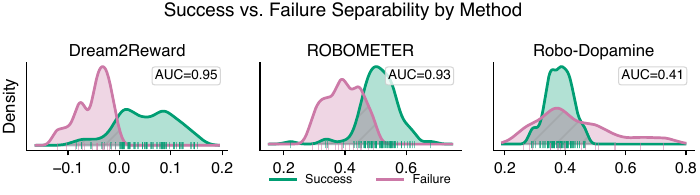}
    \caption{Failure-detection ROC-AUC on identical real-robot trajectories, using negative episode-mean native rewards.}
    \label{fig:fail_auc}
\end{figure}

On the same 400-trajectory real-robot scoring set, we evaluate episode-level success--failure separation of each native per-step reward stream. This comparison complements the component plots by measuring whether transition-level differences accumulate into a useful episode ordering and reward margin.


Failure separation is summarized by two complementary metrics:

\begin{enumerate}[
    label=(\arabic*),
    leftmargin=*,
    labelsep=0.4em,
    itemsep=0.4em,
    topsep=0.3em,
    parsep=0pt
]
    \item \emph{Failure-detection AUC.}
    Each episode is represented by the mean of its native causal reward stream, and failure-detection ROC-AUC is computed using the negative episode score. Because AUC depends only on ranking, it measures whether successful episodes tend to receive higher rewards than failed ones without being affected by reward scale (Fig.~\ref{fig:fail_auc}).

    \item \emph{Success--failure reward gap.}
    Using the evaluation-only score $\widetilde r_t$, we compute
    $\Delta\widetilde r
    =
    \mathbb{E}[\widetilde r_t\mid\text{successful episode}]
    -
    \mathbb{E}[\widetilde r_t\mid\text{failed episode}]$.
    This metric measures the separation margin after mapping each method to its own $[0,1]$ evaluation scale (Fig.~\ref{fig:fail_gap}).
\end{enumerate}

\begin{table}[tb]
\centering
\caption{Failure-sensitive reward validation on the real-robot scoring set.}
\label{tab:fail_sensitive}
\setlength{\tabcolsep}{4pt}
\begin{tabular}{lcc}
\toprule
Model & AUC $\uparrow$ & Gap $\uparrow$ \\
\midrule
Dream2Reward & 0.95 & +0.376 \\
ROBOMETER & 0.93 & +0.253 \\
Robo-Dopamine & 0.41 & $-$0.107 \\
\bottomrule
\end{tabular}
\end{table}

Table~\ref{tab:fail_sensitive} reports both metrics, and Fig.~\ref{fig:real_robot_failure_traces} shows representative per-step reward traces. On this evaluation set, Dream2Reward and ROBOMETER have similar episode-ordering AUCs ($0.95$ and $0.93$), while Dream2Reward has the larger within-method min--max-normalized success--failure gap ($+0.376$ versus $+0.253$). Robo-Dopamine yields a sub-chance AUC and negative gap on this set. The per-step traces help explain this difference in separation. Many failed rollouts reach late, high-progress-looking states before a drop or stall; the progress streams often remain high there, whereas Dream2Reward decreases when the observed motion no longer matches the predicted successful transition.

\begin{figure}[b]
    \centering
    \includegraphics[width=0.60\linewidth]{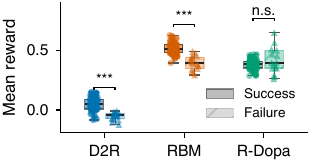}
    \caption{Evaluation-normalized success--failure gap on identical trajectories; significance uses a two-sided Mann--Whitney \(U\) test on per-trajectory scores. D2R denotes Dream2Reward, RBM denotes ROBOMETER, and R-Dopa denotes Robo-Dopamine.}
    \label{fig:fail_gap}
\end{figure}

Figure~\ref{fig:real_robot_failure_traces} shows how this separation arises over time. In the stalled-grasp and paper-bag examples, visual progress accumulates before contact or object motion ceases, but the realized displacement no longer has the expected magnitude. Plate-edge interference disrupts the expected local transition despite a plausible grasp configuration. Conversely, after an initial grasp miss, the recovery motion realigns with the predicted field and the Dream2Reward trace rises again. The reward therefore responds to local failure and recovery events rather than assigning a fixed score to the episode as a whole.

\begin{figure*}[tp]
    \centering
    \includegraphics[width=0.87\textwidth]{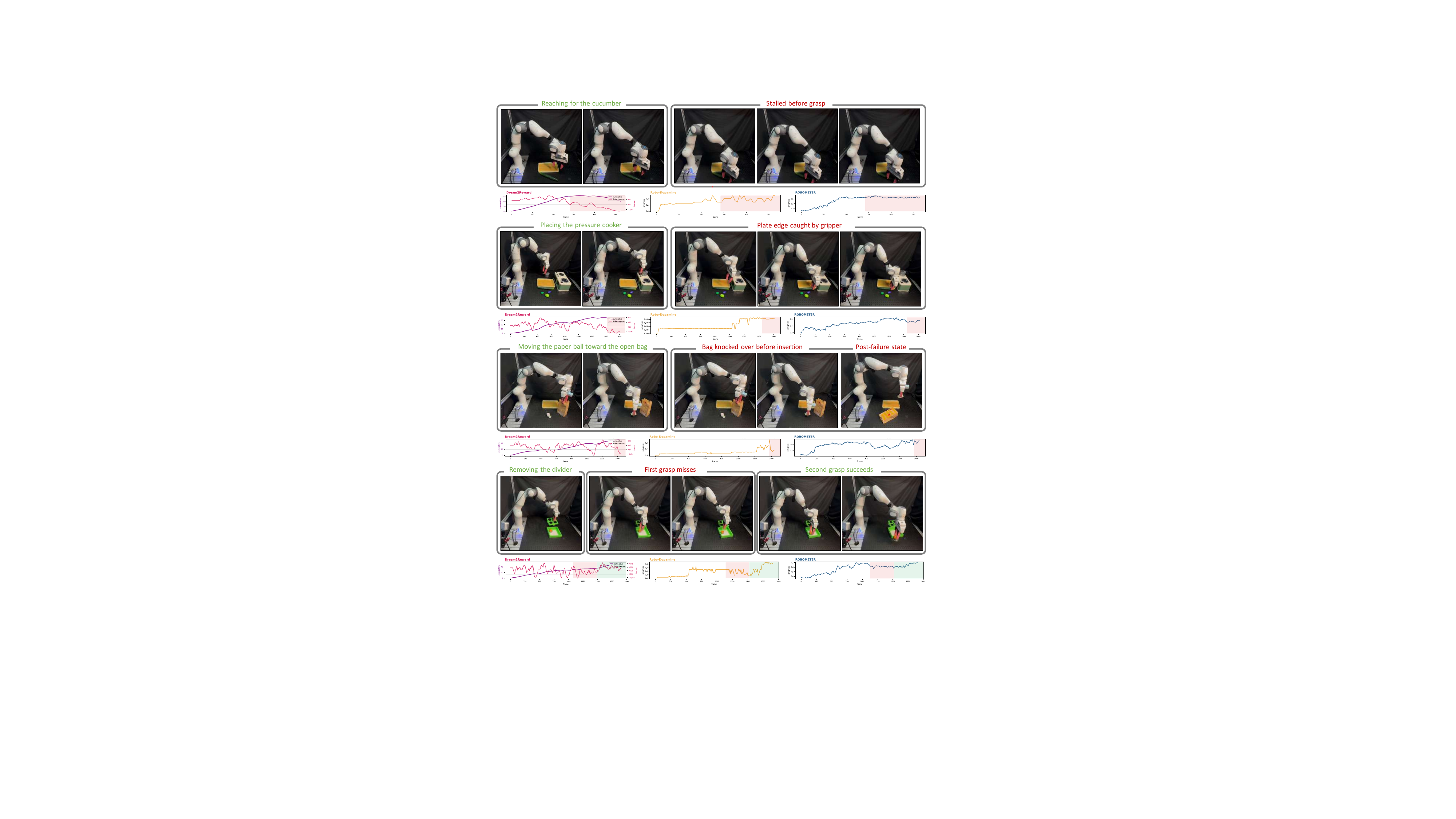}
    \caption{
    Representative real-robot failure and recovery traces.
    Each block shows task snapshots on the top and synchronized reward traces below, comparing Dream2Reward with the progress-based baselines ROBOMETER and Robo-Dopamine on the same rollout.
    Green labels indicate nominal task progress or successful recovery, red labels indicate failure events or post-failure states, and the shaded intervals are qualitative annotations for the displayed traces.
    The examples cover stalled grasping, paper-bag cleanup failure, plate-edge interference during grasping, and recovery after an initial grasp miss.
    }
    \label{fig:real_robot_failure_traces}
\end{figure*}

\subsection{Ablation Study}\label{sec:ablation}

We ablate angle only, magnitude only, and no language conditioning, removing one factor of \eqref{eq:per_cand} or the task text \(g\). The first two variants test whether either geometric component alone explains the result; the third tests whether the field can score motion reliably without knowing which task the same scene is meant to perform. 

Table~\ref{tab:ablation} reports the failure-detection AUC for each variant.
Angle-only and magnitude-only each obtain $0.92$ AUC, while their product obtains $0.95$, indicating that neither single factor matches the full readout on this evaluation. Removing language conditioning lowers AUC to $0.80$. Together with the denoising-energy control, these results associate the observed separation with task-conditioned transition geometry rather than either factor alone.

We further test whether failure separation follows from the generative predictor alone or from the transition-alignment readout. Keeping the frozen encoders, conditioning construction $C$, and trained velocity field $v_\theta$ unchanged, we replace transition alignment with denoising energy. This control achieves a pooled ROC-AUC of $0.458$ and a mean per-task AUC of $0.273$, showing that denoising energy does not recover the failure separation provided by transition alignment.

\begin{table}[tb]
\centering
\caption{Ablation results on failure-sensitive validation. All variants use the same evaluation protocol; AUC is failure-detection ROC-AUC.}
\label{tab:ablation}
\setlength{\tabcolsep}{4pt}
\begin{tabular}{lcc}
\toprule
Variant & Removed / changed component & AUC $\uparrow$ \\
\midrule
Full Dream2Reward & -- & 0.95 \\
Angle only & magnitude term & 0.92 \\
Magnitude only & angular term & 0.92 \\
No language conditioning & task text $g$ & 0.80 \\
\bottomrule
\end{tabular}
\end{table}

\subsection{Online Reinforcement Learning on LIBERO}
\label{sec:libero_online}

Online RL uses LIBERO-90 Tasks 28 (\emph{close the top drawer}) and 33 (\emph{close the microwave})~\cite{libero2023}, matching the task selection and protocol reported for ROBOMETER~\cite{robometer2026}. For each reward, the normalization statistics $(\mu,\sigma)$ are estimated once from the reward rollouts collected before SAC optimization and remain fixed throughout training. SAC is trained from scratch for 300k environment steps; evaluation uses 10 episodes every 5000 steps and reports mean and standard deviation over 5 seeds. Reward queries use only the external camera, while the policy receives DINO-v2-small~\cite{dinov2} external-image features and proprioception.


For the controlled online RL comparison, we run Dream2Reward and Robo-Dopamine under the same SAC pipeline. Both reward models receive observations from the same external camera at the same query rate, with Robo-Dopamine evaluated in its native single-view setting. We use the same policy architecture, interaction budget, and evaluation protocol for both methods. ROBOMETER is not rerun in this pipeline; its result on the same LIBERO tasks is reported from the original study~\cite{robometer2026}. For each rerun method and task, reward outputs are standardized using rollout statistics collected before policy optimization, and the resulting normalization parameters remain fixed throughout training.

Checkpoint-level analysis in Fig.~\ref{fig:reward_hacking} examines whether higher model rewards are associated with higher true task success rates.
Each checkpoint is summarized by its mean evaluation-normalized model reward and true success rate across evaluation episodes. The late-stage gap \(H_{\mathrm{late}}=\overline{\tilde r}-\overline{s}\) is computed over the final $25\%$ of evaluation checkpoints, and the top-$20\%$ statistic measures true success among the checkpoints ranked highest by model reward. The scatter in Fig.~\ref{fig:online_rl} directly shows high-reward/low-success checkpoints. Environment success is used only for evaluation, not SAC training.

\begin{figure}[tbp]
    \centering
    \subfloat[LIBERO Task 28 and Task 33 learning curves (SAC, 5 seeds).\label{fig:libero_online_rl}]{%
        \includegraphics[width=0.95\linewidth]{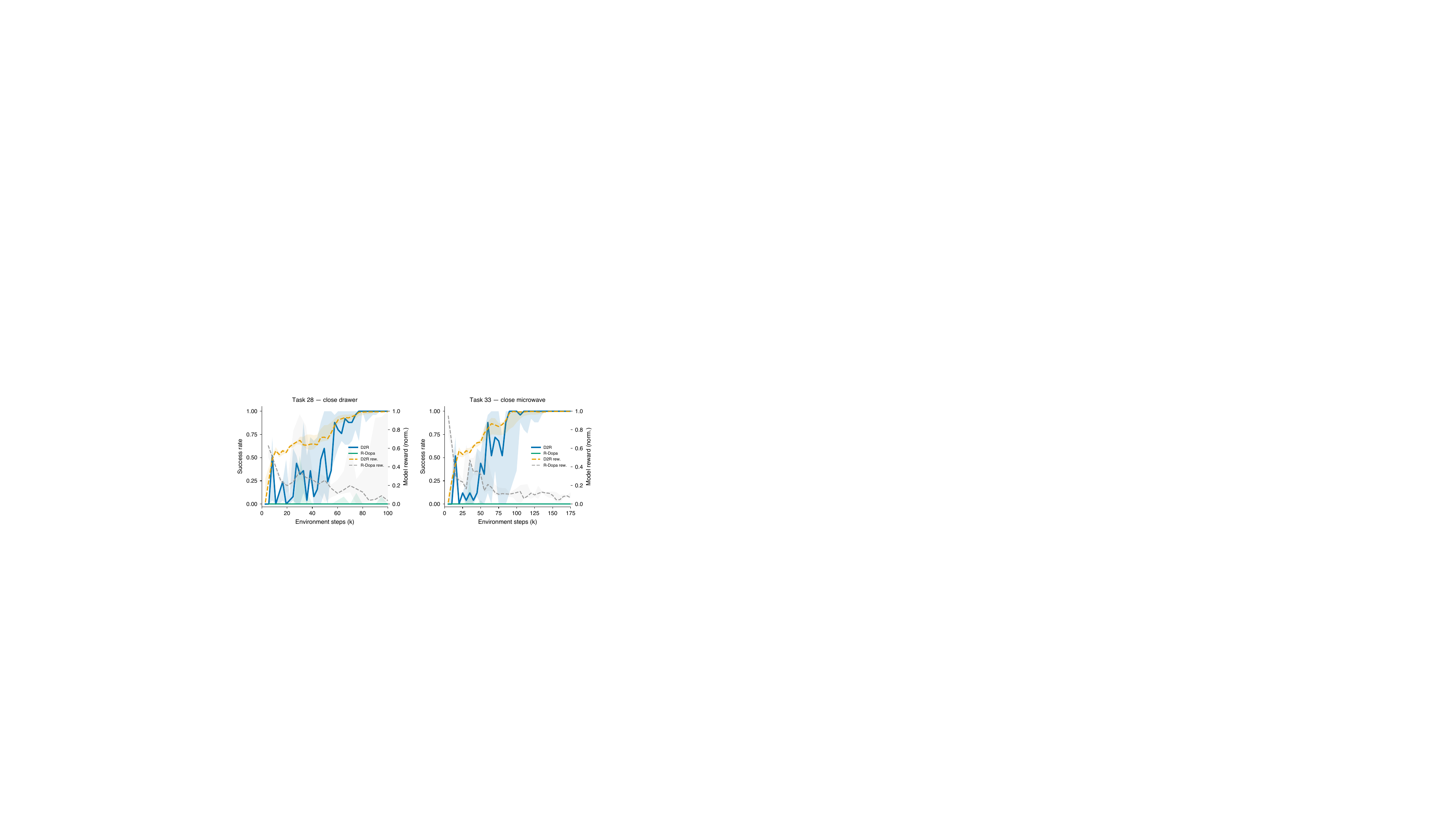}}

    \vspace{0.4em}
    \subfloat[Reward-hacking analysis on LIBERO Task 33.\label{fig:reward_hacking}]{%
        \includegraphics[width=0.95\linewidth]{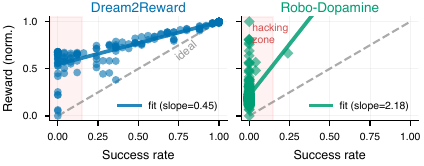}}
    \caption{Online RL comparison between Dream2Reward and Robo-Dopamine under the same SAC pipeline. (a) True task success and evaluation-normalized model reward on two manipulation tasks over five seeds; $\widetilde r_t\in[0,1]$ is used only for visualization. (b) Model reward versus true success across evaluation checkpoints. The line shows a least-squares fit, and the shaded upper-left region indicates high predicted reward despite low task success.}
    \label{fig:online_rl}
\end{figure}

Figs.~\ref{fig:libero_online_rl}--\ref{fig:reward_hacking} show that Dream2Reward drives SAC to near-full success on both tasks, whereas Robo-Dopamine solves neither. The distinction appears in both panels: Dream2Reward's model reward rises with the sparse success curve, while many Robo-Dopamine checkpoints move upward in model reward without moving toward task completion. On Task 33, Dream2Reward has a near-zero late-stage hacking gap ($|H_{\mathrm{late}}|=0.001$ versus $0.222$) and $1.00$ true success among the top-$20\%$ checkpoints ranked by model reward (versus $0.005$). Transition alignment makes a high score contingent on executing the expected local motion, reducing the opportunity to increase reward through a visually plausible but task-incorrect configuration.

\subsection{Per-Step Reward Quality for Online RL}
\label{sec:online_why}

The LIBERO results establish the downstream learning advantage, but they do not isolate the per-step reward properties that support it. We therefore evaluate reward quality independently of policy optimization, focusing on whether each reward stream provides dense, failure-sensitive, and temporally informative feedback.


For this analysis, we use the robo\_arena subset of RoboReward~\cite{robo_reward2026}, which is independent of both the 400-trajectory real-robot scoring set and the LIBERO training runs. It contains episodes from 40 tasks, episode-level quality ratings, and outputs from all three frozen reward models on identical frames. Episodes rated 4 or 5 are treated as high quality, whereas those rated 1 or 2 are treated as low quality; episodes rated 3 are excluded from group-conditioned analyses. These rollouts provide a shared basis for comparing reward-stream properties and are not used for reward-model training or LIBERO policy optimization.

To compare reward streams with different native ranges and scales, we first apply a common robust normalization. Each method's native output is centered and scaled per task using that method's median and median absolute deviation (MAD). We denote the resulting diagnostic stream by $r^{\mathrm{diag}}$, which is distinct from the mean- and standard-deviation-normalized reward $r_t$ used for policy learning in Eq.~\eqref{eq:rl_reward}. We then evaluate four complementary aspects of per-step reward quality:

\begin{enumerate}[
    label=(\arabic*),
    leftmargin=*,
    labelsep=0.4em,
    itemsep=0.4em,
    topsep=0.3em,
    parsep=0pt
]
    \item \emph{Locally flat-step fraction.}
    For $r^{\mathrm{diag}}=(r^{\mathrm{diag}}_1,\ldots,r^{\mathrm{diag}}_T)$, this metric measures the fraction of consecutive steps with negligible reward variation:
    \begin{equation}
        \mathrm{ZG}
        =
        \frac{1}{T-1}
        \sum_{t=2}^{T}
        \mathbf{1}
        \left[
        \left|
        r^{\mathrm{diag}}_t-r^{\mathrm{diag}}_{t-1}
        \right|
        \leq \epsilon_0
        \right],
        \label{eq:zerograd}
    \end{equation}
    where $\epsilon_0=0.02$ in MAD units.

    \item \emph{Signal coverage.}
    The fraction of steps whose absolute deviation from the trajectory median exceeds $0.5$ MAD. It measures how broadly non-trivial reward variation is distributed over a rollout.

    \item \emph{Low-quality negative-reward fraction.}
    The fraction of steps in low-rated episodes whose diagnostic reward falls below the task-centered neutral baseline. It measures whether low-quality behavior receives negative rather than merely small feedback.

    \item \emph{Reward--time mutual information.}
    We compute
    $I(R;T)=\sum_{i,j}p_{ij}\log\!\left(p_{ij}/(p_i p_j)\right)$
    using 10 equal-frequency reward bins and 10 uniform bins of normalized elapsed time. Lower mutual information indicates that the reward responds to executed behavior rather than primarily tracking elapsed time.
\end{enumerate}

Together, these metrics assess whether a reward stream provides broadly distributed and behavior-sensitive feedback without remaining locally flat or primarily tracking elapsed time. Table~\ref{tab:signal_quality} shows that Dream2Reward is locally flat on only $2.4\%$ of steps while maintaining a signal coverage of $0.942$. ROBOMETER is similarly non-flat but assigns below-neutral rewards to fewer steps in low-quality episodes. In contrast, Robo-Dopamine is locally flat on $57.7\%$ of steps and achieves a coverage of only $0.422$. Dream2Reward assigns below-neutral rewards to $74.5\%$ of steps in low-quality episodes, compared with $41.3\%$ for ROBOMETER and $6.1\%$ for Robo-Dopamine. These results indicate that Dream2Reward provides broadly distributed feedback while remaining sensitive to low-quality behavior.

To verify that this temporal variation arises from intermediate transition scores, we retain the trained model but suppress all nonterminal outputs. The locally flat-step fraction increases from $0.024$ to $0.936$, confirming that the intermediate alignment scores provide the dense signal observed above.

Dream2Reward also exhibits the lowest reward--time dependence, with $I(R;T)=0.011$, compared with $0.083$ for ROBOMETER and $0.042$ for Robo-Dopamine. Progress-based scores may correlate with elapsed time because both tend to increase along nominal demonstrations, whereas Dream2Reward evaluates the transition occurring within each fixed temporal window. Together, the low mutual information and high signal coverage indicate that its dense variation is not primarily explained by trajectory progress.

\begin{table}[tb]
\centering
\caption{Per-step reward signal quality on robo\_arena. Rewards are normalized per method and task using the median and MAD, and results are macro-averaged across tasks. Best values are bold.}
\label{tab:signal_quality}
\setlength{\tabcolsep}{2.2pt}
\begin{tabular}{lccc}
\toprule
Property & D2R & ROBOMETER & Robo-Dopamine \\
\midrule
Locally-flat step fraction $\downarrow$ & \textbf{0.024} & 0.032 & 0.577 \\
Signal coverage $\uparrow$ & \textbf{0.942} & 0.925 & 0.422 \\
Low-qual.\ neg.-reward frac.\ $\uparrow$ & \textbf{0.745} & 0.413 & 0.061 \\
Reward--time MI $I(R;T)$ $\downarrow$ & \textbf{0.011} & 0.083 & 0.042 \\
\bottomrule
\end{tabular}
\end{table}

\subsection{Real-Robot Offline RL with Q-Guided Flow}
\label{sec:real_robot_qgf}

\begin{figure}[tbp]
    \centering
    \subfloat[The four contact-rich tabletop manipulation tasks: cucumber-to-plate, bag opening and package insertion, pressure-cooker placement with vegetables, and divider removal with food extraction.\label{fig:real_robot_tasks}]{%
        \includegraphics[width=\linewidth]{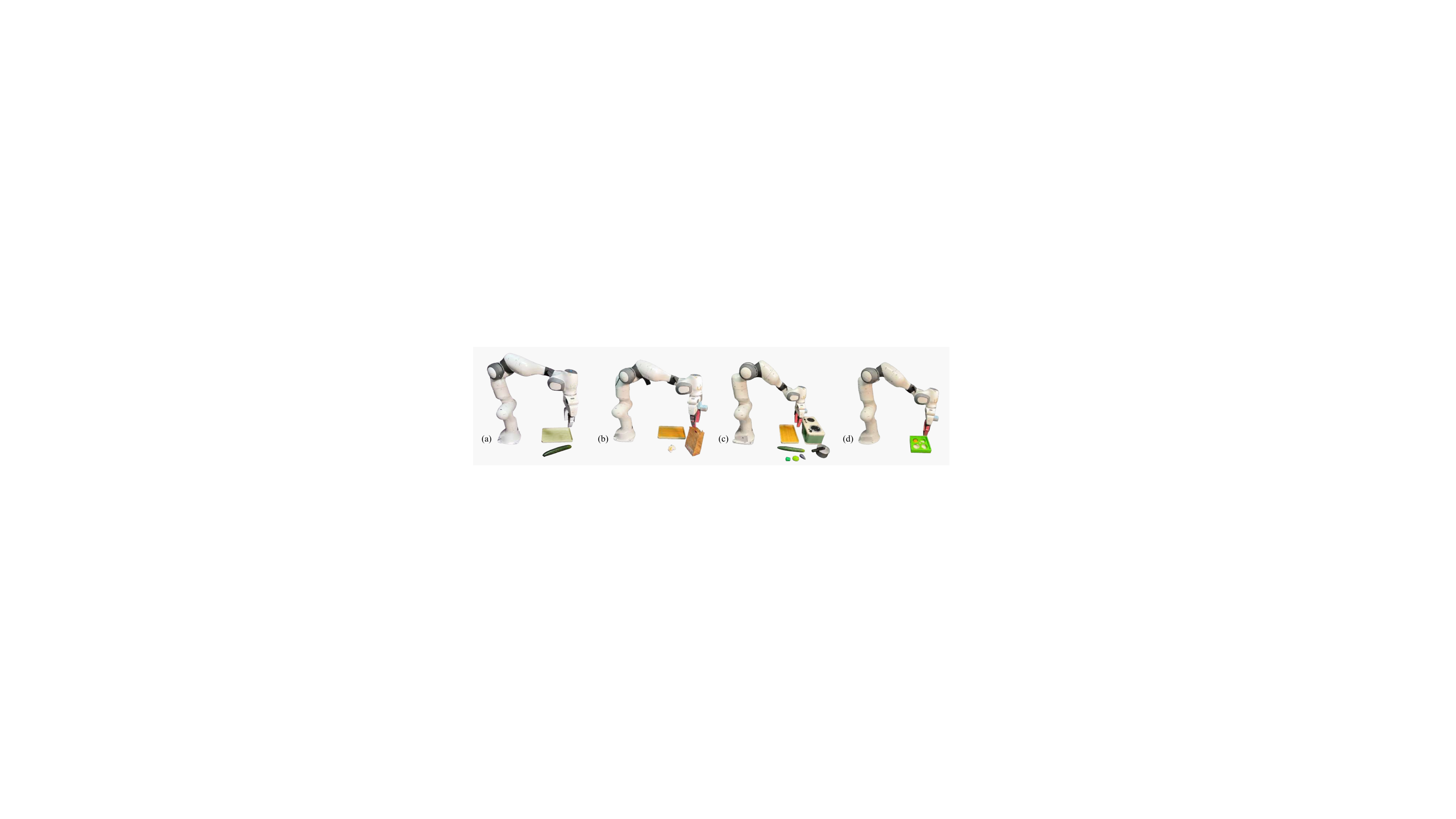}}

    \vspace{0.4em}
    \subfloat[Offline RL success rates per task. Each group compares the same fine-tuned BC policy without guidance against QGF policies trained with dense rewards from Robo-Dopamine, ROBOMETER, and Dream2Reward. Each per-task rate is over 50 evaluation trials; the Average group pools all four tasks ($n=200$), equal to the macro-average because every task has the same number of trials.\label{fig:real_robot_success_rates}]{%
        \includegraphics[width=\linewidth]{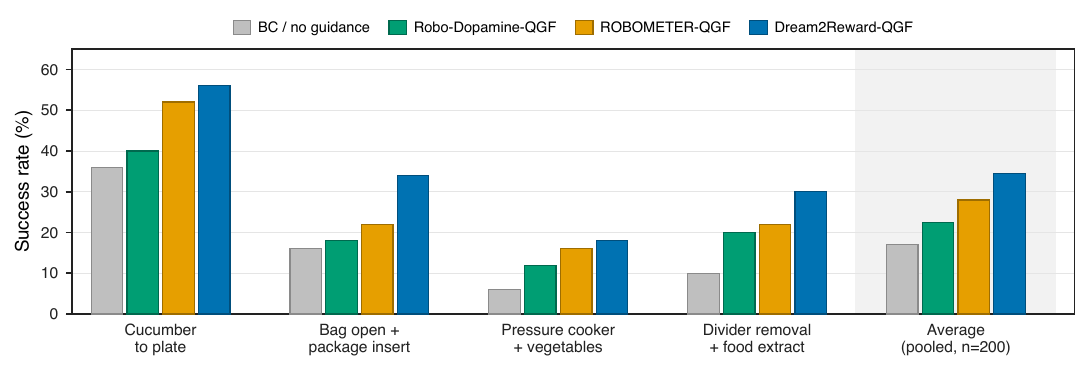}}
    \caption{Real-robot offline RL with Q-guided flow on four tabletop manipulation tasks: the task suite (top; panels (a)--(d) show the four tasks) and per-task offline RL success rates (bottom).}
    \label{fig:real_robot}
\end{figure}

The preceding analyses show that Dream2Reward provides dense, behavior-sensitive feedback with limited dependence on elapsed time. We next test whether these reward properties translate beyond online simulation to zero-shot transition labeling for real-robot offline policy learning.

We evaluate zero-shot reward labeling for real-robot offline RL. For each task, 50 human demonstrations lightly fine-tune a DROID-pretrained $\pi_{0.5}$ policy~\cite{openpi2025}; the resulting policy without Q guidance is the behavior-cloning (BC) baseline. We then run that policy for 50 additional trajectories per task. Unlike the demonstrations, these policy rollouts contain both successful and failed executions without manual dense rewards.

Each frozen reward model labels every transition in the combined demonstration and policy-collected dataset, comprising 100 trajectories per task. For each reward method, the normalization statistics $(\mu,\sigma)$ are estimated from the fixed combined dataset of demonstrations and policy-collected trajectories before critic training and are subsequently held fixed. A separate task critic is then trained on this fixed mixed-quality dataset with the IQL-style objective used by Q-Guided Flow (QGF)~\cite{qgf2026}, which adopts an IQL-style critic objective~\cite{iql2022}. At test time, QGF guides the shared fine-tuned flow policy along the learned critic gradient~\cite{qgf2026}. 

The four contact-rich tabletop tasks are \emph{cucumber to plate}, \emph{bag opening and package insertion}, \emph{pressure cooker and vegetables}, and \emph{divider removal and food extraction} (Fig.~\ref{fig:real_robot_tasks}). They include long-horizon insertion, multi-stage placement, object contact, and recovery, so a single trajectory may contain useful partial motion followed by a stall, drop, or incorrect contact. This mixed structure makes the quality of per-transition critic labels important even when the episode outcome is known only after execution.

Across 50 evaluation trials per task, Dream2Reward-QGF achieves the highest success rate on every task (Fig.~\ref{fig:real_robot_success_rates}): $56\%$ ($28/50$), $34\%$ ($17/50$), $18\%$ ($9/50$), and $30\%$ ($15/50$) on cucumber-to-plate, bag opening and package insertion, pressure cooker and vegetables, and divider removal and food extraction, respectively. Its pooled success is $34.5\%$ ($69/200$), compared with $28\%$ ($56/200$) for ROBOMETER-QGF, $22.5\%$ ($45/200$) for Robo-Dopamine-QGF, and $17\%$ ($34/200$) for BC. The observed pooled success rate for Dream2Reward-QGF is $17.5$ percentage points above BC and $6.5$ points above ROBOMETER-QGF; per-task observed differences versus ROBOMETER-QGF range up to $12$ points and favor Dream2Reward on all four tasks.

\subsection{Transition-Level Credit Assignment in Offline RL}
\label{sec:offline_why}

Because QGF learns from per-transition reward labels, we next examine whether the reward streams provide informative transition-level supervision beyond their final policy outcomes. We use the robo\_arena dataset and the diagnostic normalization and episode-quality groups defined in Sec.~\ref{sec:online_why} to compare the label structure produced by each reward model. Specifically, we evaluate whether the labels provide non-trivial supervision throughout a trajectory, distinguish failed from successful behavior when rewards decline, and reflect execution quality near the end of an episode. To quantify these properties, we define three complementary diagnostics. Among them, the effective-credit fraction and end-segment metrics are first averaged within each task and then macro-averaged across the 40 tasks.

\begin{enumerate}[
    label=(\arabic*),
    leftmargin=*,
    labelsep=0.4em,
    itemsep=0.4em,
    topsep=0.3em,
    parsep=0pt
]
    \item \emph{Effective-credit fraction.}
    A step carries effective credit when its diagnostic reward differs from the trajectory median by more than $0.5$ trajectory MAD, i.e.,
    $|r_t^{\mathrm{diag}}-\operatorname{median}(r^{\mathrm{diag}})|
    \allowbreak > 0.5\,\operatorname{MAD}(r^{\mathrm{diag}})$.
    The resulting fraction measures how broadly the reward provides non-trivial local supervision.

    \item \emph{Failure-drop min-AUC.}
    Within each task, below-median reward steps from successful and failed episodes are pooled to compute a success-versus-failure ROC-AUC. We report the minimum AUC across tasks to capture the weakest task-level failure separation.

    \item \emph{End-segment metrics.}
    Over the final $20\%$ of each trajectory, we compute the success--failure reward gap and the mean reward of failed episodes, measuring terminal separation and failure feedback.
\end{enumerate}

Table~\ref{tab:offline_credit} shows that Dream2Reward supplies non-trivial critic labels on $80.1\%$ of steps, compared with $70.3\%$ for ROBOMETER and $40.4\%$ for Robo-Dopamine. It also leads the failure-drop separation and end-segment gap, and gives the most negative mean reward on failed end segments ($-0.091$, versus $-0.001$ and $+0.004$). For critics trained from per-transition rewards, broad label coverage and low failed-ending values are desirable; the robo\_arena results show both properties, while the separate four-task experiment measures downstream QGF success.

\begin{table}[tb]
\centering
\caption{Offline-relevant transition-label statistics on robo\_arena. Rewards are normalized per method and task using the median and MAD, and results are macro-averaged over 40 tasks. Best values are bold.}
\label{tab:offline_credit}
\setlength{\tabcolsep}{2.2pt}
\begin{tabular}{lccc}
\toprule
Property & D2R & ROBOMETER & Robo-Dopamine \\
\midrule
Effective-credit step fraction $\uparrow$ & \textbf{0.801} & 0.703 & 0.404 \\
Failure-drop min-AUC $\uparrow$ & \textbf{0.661} & 0.580 & 0.333 \\
End-segment success--fail gap $\uparrow$ & \textbf{0.098} & 0.056 & 0.009 \\
Failure end-segment reward $\downarrow$ & \textbf{$-0.091$} & $-0.001$ & $+0.004$ \\
\bottomrule
\end{tabular}
\end{table}

Across robo\_arena, Dream2Reward's stream is broadly varying, weakly dependent on elapsed time, and lower on transitions drawn from failed or low-quality episodes. Separate LIBERO and real-robot QGF experiments show higher observed policy success under their respective protocols, consistent with the reward properties measured in the diagnostics.

\subsection{Summary and Discussion}

The experiments provide complementary evidence at the transition, reward-stream, and policy-learning levels. The alignment diagnostics show that Dream2Reward responds to task-mismatched and failed transitions, while the trajectory-level analyses demonstrate consistent separation between successful and failed execution. The independent robo\_arena evaluation further shows that its rewards remain informative over meaningful portions of a trajectory, provide negative feedback for low-quality behavior, and are less dominated by elapsed time. Finally, the LIBERO and real-robot QGF experiments show that the same frozen reward model supports stronger online and offline policy learning. Across these independent datasets and evaluation protocols, the results consistently support local transition correctness as useful supervision for reward-guided optimization.

Transition alignment also distinguishes Dream2Reward from progress- and generation-based reward readouts. Scalar progress orders observations along nominal successful trajectories, but does not directly assess whether the realized displacement has the appropriate direction and magnitude. Likelihood, prediction residual, and denoising energy instead measure how well a generative model explains an endpoint or conditional future, and may therefore reflect appearance uncertainty or model fit without explicitly evaluating motion correctness. Dream2Reward uses the predicted successful continuation to construct a reference displacement and directly compares it with the observed displacement. The signed cosine term captures whether the realized motion follows or opposes the task-conditioned direction, while the symmetric log-ratio penalizes proportional under- and over-motion. Their product therefore requires agreement in both direction and magnitude. A fixed inference horizon provides a consistent temporal scale, while conditioning only on observations available at the transition start preserves the causal construction of the reference. Transition alignment thus complements global progress estimation with a local measure of whether each realized transition is consistent with successful task execution.


\section{Conclusion}
\label{sec:conclusion}

Dream2Reward learns dense, task-conditioned rewards from successful demonstrations by predicting a successful latent displacement from past visual context and language, and then evaluating each observed transition through directional and magnitude alignment. This construction preserves causal scoring and provides a local measure of transition correctness rather than relying solely on trajectory progress or generative fit. Our analysis establishes bounded sensitivity to perturbations in the predicted transition field under non-degenerate conditions. Across transition-level diagnostics, reward-stream analyses, online reinforcement learning in LIBERO, and real-robot offline policy learning, the same frozen reward model consistently separates successful from failed behavior, provides informative per-step feedback, and supports stronger downstream policy learning than the compared reward models. These results support transition alignment as an effective approach for converting positive demonstrations into dense rewards for online and offline robot learning.



\bibliographystyle{IEEEtran}
\bibliography{example}

\end{document}